\documentclass[11pt]{article}

\usepackage[numbers]{natbib}
\usepackage[margin=1in]{geometry}

\usepackage[utf8]{inputenc} 
\usepackage[T1]{fontenc}    
\usepackage{hyperref}       
\usepackage{url}            
\usepackage{booktabs}       
\usepackage{amsfonts}       
\usepackage{nicefrac}       
\usepackage{microtype}      
\usepackage{xcolor}         

\usepackage{amsmath,amssymb,amsthm}
\usepackage{graphicx}
\usepackage{enumitem}
\usepackage{algorithm}
\usepackage{algorithmic}
\usepackage{multirow}
\usepackage{pgfplots}
\pgfplotsset{compat=1.18}
\usepackage{authblk}

\newtheorem{definition}{Definition}
\newtheorem{assumption}{Assumption}
\newtheorem{proposition}{Proposition}
\newtheorem{theorem}{Theorem}

\newcommand{\R}{\mathbb{R}}
\newcommand{\norm}[1]{\left\lVert #1 \right\rVert}
\newcommand{\ip}[2]{\left\langle #1,#2 \right\rangle}

\title{Hallucination-Aware Diffusion Sampling for \\Inverse Problems via Robust Prior Updates}

\author[1]{Pengfei Jin\textsuperscript{*}}
\author[1,2]{Yiqi Tian \textsuperscript{*}}
\author[1]{Kailong Fan}
\author[1]{Bingjie Qi}
\author[1]{Quanzheng Li\textsuperscript{\dag}}

\affil[1]{Center for Advanced Medical Computing and Analysis, Massachusetts General Hospital and Harvard Medical School, Boston, MA 02114}
\affil[2]{Department of Industrial Engineering, University of Pittsburgh, Pittsburgh, PA 15261}
\date{}

\begin{document}
\maketitle

\renewcommand{\thefootnote}{\fnsymbol{footnote}}
\footnotetext[1]{Equal contribution.}
\footnotetext[2]{Corresponding author. Email: \texttt{li.quanzheng@mgh.harvard.edu}.}
\renewcommand{\thefootnote}{\arabic{footnote}}

\begin{abstract}
Diffusion-based inverse problem solvers can produce realistic reconstructions, but realism alone does not ensure that the recovered details are supported by the measurement. We study this failure as measurement-conditioned hallucination: visually meaningful content that is either implausible or inconsistent with the measured instance. Our analysis separates Bayes-rule-based diffusion inverse solvers into a prior update and a measurement-conditioning step, showing that hallucinated content can enter through the prior-side proposal before the measurement correction is applied. Motivated by this view, we propose Robust Prior Update (RPU), a solver-level module that probes the local stability of the diffusion prior update, re-anchors the resulting displacement at the current iterate, and leaves the measurement update unchanged. We instantiate RPU in DPS and evaluate it on FFHQ and ImageNet inverse problems using automatic metrics and human faithfulness studies. On FFHQ, RPU improves PSNR and LPIPS over DPS across box inpainting, Gaussian deblurring, and motion deblurring. In human judgments, RPU receives 91.9\% of blind non-tie majority preferences and 91.1\% of ground-truth-assisted non-tie preferences on FFHQ box inpainting, while the ImageNet Gaussian reader study is tie-heavy but favors RPU among non-tie cases. These results support a targeted claim: robustifying the prior update can improve instance faithfulness in diffusion inverse solvers, especially when the prior shapes weakly constrained content.
\end{abstract}

\section{Introduction}

Inverse problems seek to recover an unknown signal from indirect, incomplete, or corrupted measurements. They arise across a wide range of imaging and reconstruction tasks, including accelerated MRI, sparse-view CT, image inpainting, super-resolution, deblurring, compressed sensing, and phase retrieval \citep{lustig2008compressed,dong2015image, saharia2022image,candes2006robust,donoho2006compressed,fienup1982phase}. A common feature of these problems is that the forward measurement process discards information: a masking operator removes pixels or Fourier coefficients, a blur operator suppresses high-frequency details, and a phase-retrieval operator loses phase information. As a result, the inverse map is generally ill-posed, and a single observation may be compatible with many plausible underlying signals \citep{tarantola2005inverse,scarlett2023theoretical}. Solving such problems therefore requires not only enforcing consistency with the measurement, but also imposing a prior that selects among the many feasible reconstructions.

Diffusion models provide a powerful prior for resolving the ambiguity of inverse problems, but the same generative strength can also introduce content that is not determined by the measurement. This risk is related to hallucination in generative models, where outputs may be unfounded, unfaithful, or not grounded in the underlying data \citep{ji2023survey}. In unconditional diffusion generation, such failures can appear as visually implausible or out-of-support samples, including distorted anatomy, abnormal hands, or incoherent object structures \citep{aithal2024understanding,tian2025rods}. In inverse problems, however, the more concerning failure mode can be subtler: a reconstruction may look realistic while containing details that are not supported by the observed measurement. This measurement-conditioned hallucination is difficult to detect by visual realism alone, because perceptual plausibility does not guarantee measurement faithfulness.

This distinction makes measurement-conditioned hallucination especially important in practice. Many real-world uses of generative reconstruction are conditional rather than purely unconditional: in seismic imaging, audio restoration, and medical imaging, the goal is to recover a signal faithful to a specific observation, not merely to generate a realistic sample \citep{lailly1983sequence,moliner2023solving,song2021solving,chung2022score}. In these settings, visual plausibility can be misleading, since unsupported changes may affect interpretation, diagnosis, or downstream analysis. Prior work has recognized hallucination in inverse problems and conditional reconstruction, including false structures in tomographic reconstruction, instability in AI-based inverse solvers, hallucination metrics for generative reconstruction, and hallucination reduction for conditional medical reconstruction \citep{bhadra2021hallucinations,gottschling2025troublesome,tivnan2024hallucination,kim2025tackling}. However, relatively few works study how such hallucination errors arise inside diffusion-based inverse solvers. This leaves open a mechanistic question: when do unsupported details enter the diffusion-based inverse solving process, and how do they persist despite measurement-consistency corrections?

We address this question by studying the update process of Bayes-rule-based diffusion inverse solvers. In these methods, the reverse process couples two roles: the diffusion prior fills in missing or ambiguous content, while the measurement term constrains the reconstruction to the observation. Therefore, the hallucination can appear either as visually implausible artifacts or as realistic-looking details unsupported by the measurement.

\begin{figure}[t]
    \centering
    \includegraphics[width=0.85\textwidth]{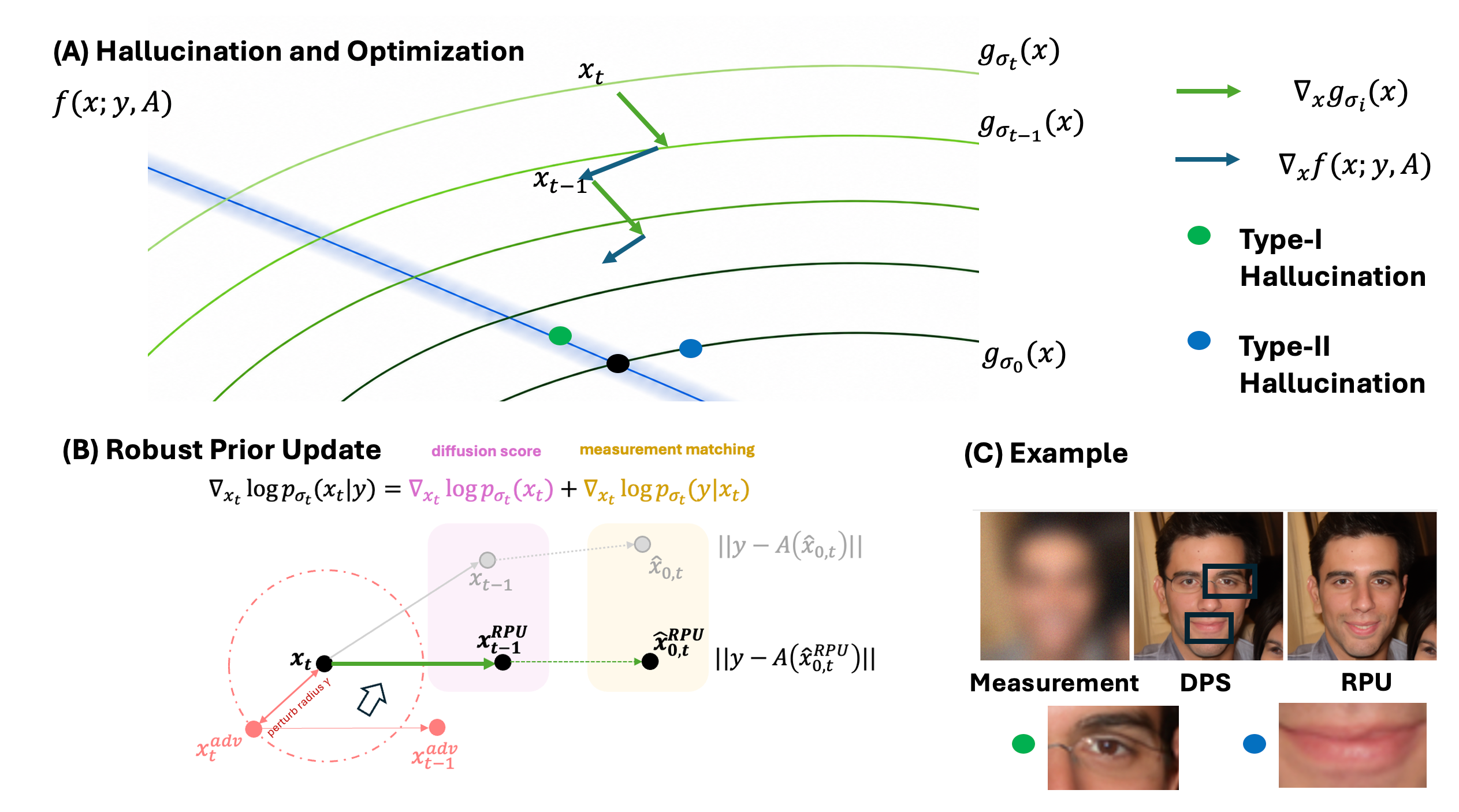}
    \caption{Overview. (A) Diffusion inverse solvers balance measurement fidelity \(f(x;y,A)\) and the diffusion prior \(g_{\sigma_i}(x)\); errors in these directions can yield visually implausible Type-I or measurement-unsupported Type-II hallucinations. (B) RPU probes the prior update, re-anchors the resulting displacement at the current iterate, and keeps the measurement condition unchanged. (C) In Gaussian deblurring, DPS hallucinates a partial eyeglass and unsupported mouth detail compared with RPU..}
    \label{fig:intro-roadmap}
\end{figure}

Our contributions are summarized as follows:
\begin{enumerate}[leftmargin=1.5em]
    \item We define hallucination in generative inverse problems through two cases: Type-I hallucination, which captures visually implausible artifacts, and Type-II hallucination, which captures plausible but measurement-unsupported details.

    \item We derive an alternating-optimization view of Bayes-rule-based diffusion inverse solvers, separating the measurement-consistency step from the prior-update step and identifying the prior update as a natural intervention point.

    \item We propose \textbf{RPU}, a robust prior-update module that stabilizes the prior-induced displacement while keeping the measurement update fixed.

    \item We instantiate RPU in DPS and evaluate it with quantitative metrics and human faithfulness judgments on FFHQ and ImageNet inverse problems.
\end{enumerate}

\section{Related Work}

\paragraph{Diffusion-based inverse problem solvers.}

Diffusion inverse-problem solvers are commonly organized by how they combine a pretrained diffusion prior with the measurement model~\citep{daras2024survey}. Measurement-guided methods approximate the likelihood or posterior score, including MCG~\citep{chung2022mcg}, DPS~\citep{chung2023dps}, DDRM~\citep{kawar2022ddrm}, DDNM~\citep{wang2022ddnm}, and \(\Pi\)GDM~\citep{song2023pigdm}. Other lines use variational or plug-and-play objectives, such as DiffPIR~\citep{zhu2023diffpir} and RED-Diff~\citep{mardani2023variational}; optimize latent variables or initial noise as in CSGM~\citep{bora2017compressed} and Ambient Diffusion~\citep{daras2023solving}; or design more exact Monte Carlo posterior samplers such as TDS~\citep{wu2023practical}. We focus on Bayes-rule-based solvers, especially DPS~\citep{chung2023dps}, and ask how hallucination can be shaped by the prior-side proposal rather than only by the measurement-matching approximation.

\paragraph{Hallucination in generative and reconstruction models.}
In unconditional generation, hallucination often means out-of-support or artifact-like samples, with diffusion failures linked to mode interpolation~\citep{aithal2024understanding} and unstable reverse updates~\citep{tian2025rods}. In conditional reconstruction, the failure is measurement-dependent: a plausible image may still contain structures unsupported by the observation. Prior work studies this issue through tomographic null-space analysis~\citep{bhadra2020hallucinations}, information-theoretic limits of generative restoration~\citep{cohen2024lookstoogood}, hallucination indexes~\citep{tivnan2024hallucinationindex}, and adaptive mitigation in DynamicDPS~\citep{kim2025dynamicdps}. These works motivate hallucination as a reconstruction reliability problem; our focus is the solver process by which such content enters a Bayes-rule-based diffusion trajectory.

\paragraph{Robust optimization and perturbation-based robustness.}
Robust optimization accounts for uncertainty by optimizing against worst-case perturbations in a prescribed set~\citep{ben2002robust,bertsimas2011theory,lu2024two}. Related machine-learning tools include adversarial training and perturbation-based regularization for local stability~\citep{madry2017towards,miyato2018virtual,foret2020sharpness}. We use this perspective at the sampler level, robustifying the prior-side update that injects generative content into the inverse-solving trajectory.

\section{Preliminary and Hallucination Definition}
\label{sec:problem}

\subsection{Diffusion-based inverse problem solver}

We consider an inverse problem $y = A(x^\star) + n$, where $x^\star \in \R^d$ is the unknown clean image, $A$ is the forward measurement operator, $n$ is observation noise, and $y$ is the observed measurement. A standard reconstruction objective balances measurement consistency $f$ with an image prior $g$:
\begin{equation}
    \min_x
    \underbrace{f(x;y,A)}_{\text{measurement fidelity}}
    +
    \underbrace{g(x)}_{\text{image prior / regularization}} .
    \label{eq:inverse-objective}
\end{equation}
Specifically, for Gaussian measurement noise, we use $f(x;y,A)=\frac{1}{2\sigma_y^2}\norm{A(x)-y}^2$.

Diffusion-based inverse-problem solvers use the same two ingredients, but usually through posterior sampling rather than direct minimization of \eqref{eq:inverse-objective}. Many methods use a pretrained diffusion model as the image prior and guide the reverse trajectory toward the measurement-conditioned distribution \citep{song2021scoremedical,chung2023dps,daras2024survey}. At noise level $\sigma$, let $p_\sigma$ denote the smoothed image distribution and $s_\sigma(x)=\nabla_x\log p_\sigma(x)$ its score. Equivalently, with $\phi_\sigma(x)=-\log p_\sigma(x)$, the diffusion prior can be written as $g_\sigma(x)=\lambda\phi_\sigma(x)$, with $\nabla\phi_\sigma(x)=-s_\sigma(x)$. A common class of methods can be viewed as Bayes-rule-based posterior solvers, where Bayes' rule decomposes the posterior score as
\begin{equation}
    \nabla_{x_t}\log p_{\sigma_t}(x_t\mid y)
    =
    \underbrace{\nabla_{x_t}\log p_{\sigma_t}(x_t)}_{\text{diffusion score}}
    +
    \underbrace{\nabla_{x_t}\log p_{\sigma_t}(y\mid x_t)}_{\text{measurement matching}} .
    \label{eq:bayes-score-decomposition}
\end{equation}
The first term is approximated by $s_{\sigma_t}(x_t)$. The second term is the measurement-matching direction. For example, DPS \citep{chung2023dps} approximates this term through a denoised estimate $\hat{x}_0(x_t)$:
\begin{equation}
    \nabla_{x_t}\log p_{\sigma_t}(y\mid x_t)
    \approx
    -\rho_t
    \nabla_{x_t}
    \ell_y(\hat{x}_0(x_t)),
    \qquad
    \ell_y(\hat{x}_0)
    =
    \frac{1}{2\sigma_y^2}\norm{A(\hat{x}_0)-y}^2 .
    \label{eq:dps-measurement-gradient}
\end{equation}

\subsection{Hallucination Definition}
\label{subsec:hallucination-definition}

\begin{definition}[Inverse-problem hallucination]
\label{def:ip-hallucination}
Given a measurement $y=A(x^\star)+n$ and a reconstruction $\hat{x}$ produced from $(y,A)$, we refer to image content in $\hat{x}$ as hallucinated when it is introduced or substantially amplified by the reconstruction procedure, appears as meaningful image content rather than random noise, and is not faithful to the corresponding ground-truth instance $x^\star$ in a way justified by the measurement model.
\end{definition}

This definition separates hallucination from general reconstruction error. Blurring, missing texture, or small pixel-level deviations may reduce reconstruction quality, but they are not necessarily hallucinations. The failure of interest is solver-induced content that appears as meaningful image structure but is not supported by the measured instance. We use Type-I hallucination to describe visible solver-induced artifacts or unstable structures that can often be identified from $\hat{x}$ alone, and Type-II hallucination to describe plausible-looking details that appear reasonable in isolation but are inconsistent with $x^\star$; in our evaluation, the latter is assessed by comparing reconstructions with the ground truth.

Standard metrics provide useful context but do not fully capture instance-level faithfulness. PSNR, LPIPS, FID, and measurement residuals measure reconstruction quality, perceptual similarity, distribution-level realism, or consistency with $y$, but they do not directly assess whether visible details agree with $x^\star$. We therefore report these metrics as context and use human comparison studies to assess perceived faithfulness to the ground truth.

\section{Alternating Optimization View}
\label{sec:alternating}

Bayes-rule-based diffusion inverse solvers, including DPS-style posterior guidance, combine a diffusion-prior direction with a measurement-matching direction. This mirrors the two terms in \eqref{eq:inverse-objective}: $f$ enforces consistency with $(y,A)$, while $g_\sigma=\lambda\phi_\sigma$ is the smoothed diffusion prior. 

\subsection{Continuation alternating subproblems}

For a fixed smoothing level $\sigma$, define
\begin{equation}
    F_\sigma(x) := f(x;y,A) + g_\sigma(x),
    \qquad
    g_\sigma(x)=\lambda\phi_\sigma(x),
    \qquad
    \nabla \phi_\sigma(x)=-s_\sigma(x).
    \label{eq:smoothed-objective}
\end{equation}
We idealize the two directions as local proximal subproblems:
\begin{align}
    z_k
    &=
    \arg\min_z
    \left\{
    f(z;y,A)
    +
    \frac{1}{2\eta_f}\norm{z-x_k}^2
    \right\}, \label{eq:measurement-subproblem}\\
    x_{k+1}
    &=
    \arg\min_x
    \left\{
    g_{\sigma_k}(x)
    +
    \frac{1}{2\eta_p}\norm{x-z_k}^2
    \right\}.
    \label{eq:prior-subproblem}
\end{align}

The proximal terms are important conceptually: they make each subproblem a local correction around the current iterate, rather than an attempt to minimize the measurement or prior term in isolation. This matches practical diffusion inverse solvers, where both the data-consistency step and the score-model step are small updates inside a decreasing-noise trajectory.

We do not require closed-form solutions. Let $\mathcal{M}_{\eta_f}$ and $\mathcal{P}_{\eta_p,\sigma_k}$ denote exact or approximate oracles for \eqref{eq:measurement-subproblem} and \eqref{eq:prior-subproblem}:
\begin{equation}
    z_k = \mathcal{M}_{\eta_f}(x_k),
    \qquad
    x_{k+1}=\mathcal{P}_{\eta_p,\sigma_k}(z_k).
    \label{eq:oracle-calls}
\end{equation}
Different oracle approximations give different concrete solvers. With a one-step first-order approximation,
\begin{equation}
    z_k = x_k-\alpha_k
\nabla f(x_k),
    \qquad
    x_{k+1}
    =
    z_k-\beta_k
\nabla g_{\sigma_k}(z_k)
    =
    z_k+\beta_k\lambda s_{\sigma_k}(z_k),
    \label{eq:first-order-split}
\end{equation}
which is the split update analyzed below.

\subsection{Local fixed-smoothing analysis}

The following local, fixed-$\sigma$ statement gives a descent interpretation for \eqref{eq:first-order-split}; it is not a global convergence theorem for diffusion sampling.

\begin{assumption}[Local fixed-$\sigma$ regularity]
\label{assump:fixedsigma}
On a local basin $\mathcal{B}_\sigma$, assume: (i) $f$ has $L_f$-Lipschitz gradient; (ii) $g_\sigma$ has $L_{g,\sigma}$-Lipschitz gradient; (iii) the iterates generated by \eqref{eq:first-order-split} remain in $\mathcal{B}_\sigma$; and (iv) there exist constants $\kappa_f(\sigma),\kappa_g(\sigma)\ge 0$ such that, with $z=x-\alpha\nabla f(x)$,
\begin{equation}
    \ip{\nabla g_\sigma(x)}{\nabla f(x)} \ge -\kappa_f(\sigma)\norm{\nabla f(x)}^2,
    \qquad
    \ip{\nabla f(z)}{\nabla g_\sigma(z)} \ge -\kappa_g(\sigma)\norm{\nabla g_\sigma(z)}^2.
\end{equation}
\end{assumption}

The inner-product conditions allow the two directions to be imperfectly aligned, but rule out destructive interference that would overwhelm descent from either step. They are local conditions because the score prior is only expected to be smooth and useful within the basin currently tracked by the sampler.

Define
\begin{equation}
    a_\sigma := \alpha(1-\kappa_f(\sigma)) - \tfrac{\alpha^2}{2}(L_f+L_{g,\sigma}),
    \qquad
    b_\sigma := \beta(1-\kappa_g(\sigma)) - \tfrac{\beta^2}{2}(L_f+L_{g,\sigma}).
\end{equation}

\begin{proposition}[Fixed-$\sigma$ split-step descent]
\label{prop:fixedsigma-descent}
Under Assumption~\ref{assump:fixedsigma}, if $a_\sigma>0$ and $b_\sigma>0$, then the update $z=x-\alpha\nabla f(x)$, $x^+=z-\beta\nabla g_\sigma(z)$ satisfies
\begin{equation}
    F_\sigma(x^+) \le F_\sigma(x) - a_\sigma \norm{\nabla f(x)}^2 - b_\sigma \norm{\nabla g_\sigma(z)}^2 .
    \label{eq:local-split-descent}
\end{equation}
If, in addition, $F_\sigma$ satisfies a local Polyak--Lojasiewicz inequality on $\mathcal{B}_\sigma$ with constant $\mu_\sigma>0$, then with $c_\sigma := \min\{a_\sigma/[2(1+\alpha L_{g,\sigma})^2],\, b_\sigma/2\} > 0$, whenever $0<2\mu_\sigma c_\sigma<1$,
\begin{equation}
    F_\sigma(x^+) - F_\sigma^\star \le (1-2\mu_\sigma c_\sigma)\left(F_\sigma(x)-F_\sigma^\star\right).
    \label{eq:fixed-sigma-contraction}
\end{equation}
\end{proposition}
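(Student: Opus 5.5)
The plan is to treat $F_\sigma=f+g_\sigma$ as a single $L$-smooth function on $\mathcal{B}_\sigma$, with $L:=L_f+L_{g,\sigma}$, which follows from Assumption~\ref{assump:fixedsigma}(i)--(ii). I would then apply the standard descent lemma to each half-step separately and sum the two bounds. The PL contraction will follow from one extra estimate: control $\norm{\nabla F_\sigma(x)}^2$ by the two quantities that actually appear in \eqref{eq:local-split-descent}.

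\emph{Measurement half-step.} With $z=x-\alpha\nabla f(x)$, the descent lemma gives
\[
F_\sigma(z)\le F_\sigma(x)-\alpha\ip{\nabla F_\sigma(x)}{\nabla f(x)}+\tfrac{\alpha^2 L}{2}\norm{\nabla f(x)}^2 .
\]
Write $\ip{\nabla F_\sigma(x)}{\nabla f(x)}=\norm{\nabla f(x)}^2+\ip{\nabla g_\sigma(x)}{\nabla f(x)}$. The first inner-product condition in Assumption~\ref{assump:fixedsigma}(iv) bounds this from below by $(1-\kappa_f(\sigma))\norm{\nabla f(x)}^2$. Hence $F_\sigma(z)\le F_\sigma(x)-a_\sigma\norm{\nabla f(x)}^2$.

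\emph{Prior half-step.} With $x^+=z-\beta\nabla g_\sigma(z)$, the same argument gives
\[
F_\sigma(x^+)\le F_\sigma(z)-\beta\bigl(\norm{\nabla g_\sigma(z)}^2+\ip{\nabla f(z)}{\nabla g_\sigma(z)}\bigr)+\tfrac{\beta^2L}{2}\norm{\nabla g_\sigma(z)}^2 .
\]
The second condition in (iv) then gives $F_\sigma(x^+)\le F_\sigma(z)-b_\sigma\norm{\nabla g_\sigma(z)}^2$. Adding the two half-step bounds yields \eqref{eq:local-split-descent}. The positivity hypotheses $a_\sigma,b_\sigma>0$ are only needed so that the result is a genuine descent.

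\emph{PL contraction.} Decompose
\[
\nabla F_\sigma(x)=\nabla f(x)+\nabla g_\sigma(z)+\bigl(\nabla g_\sigma(x)-\nabla g_\sigma(z)\bigr).
\]
Lipschitz continuity of $\nabla g_\sigma$ and $\norm{x-z}=\alpha\norm{\nabla f(x)}$ give
\[
\norm{\nabla F_\sigma(x)}\le(1+\alpha L_{g,\sigma})\norm{\nabla f(x)}+\norm{\nabla g_\sigma(z)} .
\]
Using $(u+v)^2\le 2u^2+2v^2$, this becomes
\[
\norm{\nabla F_\sigma(x)}^2\le 2(1+\alpha L_{g,\sigma})^2\norm{\nabla f(x)}^2+2\norm{\nabla g_\sigma(z)}^2 .
\]
By the definition of $c_\sigma$ we have $2c_\sigma(1+\alpha L_{g,\sigma})^2\le a_\sigma$ and $2c_\sigma\le b_\sigma$. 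Therefore the decrease in \eqref{eq:local-split-descent} is at least $c_\sigma\norm{\nabla F_\sigma(x)}^2$. The PL inequality $\norm{\nabla F_\sigma(x)}^2\ge 2\mu_\sigma(F_\sigma(x)-F_\sigma^\star)$ then gives
\[
F_\sigma(x^+)-F_\sigma^\star\le(1-2\mu_\sigma c_\sigma)\bigl(F_\sigma(x)-F_\sigma^\star\bigr),
\]
which is \eqref{eq:fixed-sigma-contraction}. The condition $0<2\mu_\sigma c_\sigma<1$ only ensures the factor lies in $(0,1)$.

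The main obstacle is not algebraic but locality. The descent lemma needs the Lipschitz-gradient bounds along the segments $[x,z]$ and $[z,x^+]$, and it needs $z$ itself to lie in $\mathcal{B}_\sigma$. Assumption~\ref{assump:fixedsigma}(iii) only speaks of "the iterates." I would read it as covering the intermediate point $z$ and, implicitly, the connecting segments, for instance by taking $\mathcal{B}_\sigma$ convex. I would state this interpretation explicitly in the proof rather than try to derive it.
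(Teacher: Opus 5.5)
Your proof is correct and follows the paper's argument. The paper applies the descent lemma to $f$ and $g_\sigma$ separately on each half-step and sums the four bounds, which is equivalent to your single descent lemma for $F_\sigma$ with constant $L_f+L_{g,\sigma}$; the PL part (the bound $\norm{\nabla F_\sigma(x)}\le(1+\alpha L_{g,\sigma})\norm{\nabla f(x)}+\norm{\nabla g_\sigma(z)}$, squaring, and the choice of $c_\sigma$) is identical, and your locality caveat about $z$ and the connecting segments lying in $\mathcal{B}_\sigma$ is a fair point that the paper's proof also leaves implicit.
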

The decreasing-$\sigma$ schedule then acts as a continuation device: large $\sigma$ smooths the prior landscape, and small $\sigma$ sharpens it toward the target image prior. Chaining the fixed-$\sigma$ statement therefore requires basin tracking, because descent at one smoothing level is useful only if the next level starts in a compatible local region.

\begin{theorem}[Continuation tracking]
\label{thm:continuation-tracking}
Assume that $F_\sigma\to F_0$ locally uniformly as $\sigma\downarrow 0$. Suppose the schedule $\sigma_0>\sigma_1>\cdots\downarrow 0$ is slow enough that consecutive local basins $\mathcal{B}_{\sigma_k}$ and $\mathcal{B}_{\sigma_{k+1}}$ overlap, the iterate produced at level $\sigma_k$ enters $\mathcal{B}_{\sigma_{k+1}}$, and the fixed-$\sigma_k$ contraction in Proposition~\ref{prop:fixedsigma-descent} holds on each basin. Then the continuation iterates track a path of local stationary regions for $F_{\sigma_k}$ and converge, up to the stated local assumptions, to a stationary point of the limiting objective $F_0$.
\end{theorem}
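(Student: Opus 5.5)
The plan is to chain the fixed-$\sigma$ contraction of Proposition~\ref{prop:fixedsigma-descent} across levels and then pass to the limit with the locally uniform convergence $F_{\sigma_k}\to F_0$. Write $x_k$ for the iterate at level $\sigma_k$ and $x_{k+1}$ for the output of the split update \eqref{eq:first-order-split} at that level. By hypothesis, $x_{k+1}\in\mathcal{B}_{\sigma_{k+1}}$. On each basin, Proposition~\ref{prop:fixedsigma-descent} applies and gives
\begin{equation*}
F_{\sigma_k}(x_{k+1})-F^\star_{\sigma_k}\le(1-2\mu_{\sigma_k}c_{\sigma_k})\bigl(F_{\sigma_k}(x_k)-F^\star_{\sigma_k}\bigr).
\end{equation*}
It also gives the sufficient-decrease bound \eqref{eq:local-split-descent}, which controls $\norm{\nabla f(x_k)}^2+\norm{\nabla g_{\sigma_k}(z_k)}^2$ by the one-step decrease. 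Here $F^\star_{\sigma}$ is the local infimum over $\mathcal{B}_\sigma$, and $\mathcal{S}_\sigma=\{x\in\mathcal{B}_\sigma: F_\sigma(x)=F_\sigma^\star\}$ is the local stationary region; under PL, stationary points coincide with local minimizers.

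The first step converts the per-level contraction into a recursion on the gap $e_k:=F_{\sigma_k}(x_k)-F^\star_{\sigma_k}$ across levels. Writing $\delta_k:=\sup_{\mathcal{B}_{\sigma_k}\cap\mathcal{B}_{\sigma_{k+1}}}|F_{\sigma_{k+1}}-F_{\sigma_k}|$, we have
\begin{equation*}
e_{k+1}\le(1-2\mu_{\sigma_k}c_{\sigma_k})e_k+\delta_k+|F^\star_{\sigma_{k+1}}-F^\star_{\sigma_k}|\le(1-\gamma_k)e_k+2\delta_k,
\end{equation*}
where $\gamma_k:=2\mu_{\sigma_k}c_{\sigma_k}$. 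Bounding the drift of the local minimum values by $\delta_k$ requires the minimizers of the two levels to lie in the overlap. This is the ``basin tracking'' hypothesis, and I will state it as part of the overlap assumption.

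Locally uniform convergence makes $\delta_k\to0$, since both $F_{\sigma_k}$ and $F_{\sigma_{k+1}}$ are within $o(1)$ of $F_0$ on a common compact neighborhood. I will assume, as ``up to the stated local assumptions'' permits, that $\gamma_k\ge\gamma>0$ uniformly, or at least $\sum\gamma_k=\infty$ with $\delta_k/\gamma_k\to0$. A standard Robbins--Siegmund / discrete Gronwall argument then gives $e_k\to0$. The PL inequality, $\norm{\nabla F_{\sigma_k}(x)}^2\ge2\mu_{\sigma_k}e$ combined with quadratic growth, converts this into $\mathrm{dist}(x_k,\mathcal{S}_{\sigma_k})\to0$, which is the ``tracking'' claim. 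Alternatively, I can sum \eqref{eq:local-split-descent} to get $\norm{\nabla f(x_k)}+\norm{\nabla g_{\sigma_k}(z_k)}\to0$. Then the Lipschitz bound $\norm{z_k-x_k}=\alpha\norm{\nabla f(x_k)}\to0$ yields $\norm{\nabla F_{\sigma_k}(x_k)}\to0$.

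The final step passes to $F_0$. The iterates lie in a bounded region, namely the union of the overlapping basins, which I take to be contained in a fixed compact set. Extract a convergent subsequence $x_{k_j}\to\bar x$. Locally uniform convergence of $F_\sigma$ alone does not give convergence of gradients, and this is the main obstacle. I would handle it variationally. Since $x_k$ is asymptotically a local minimizer of $F_{\sigma_k}$ on a ball of fixed radius, with $e_k\to0$ and quadratic growth, uniform convergence shows $\bar x$ is a local minimizer of $F_0$ by the usual $\Gamma$-convergence argument. Hence $\bar x$ is stationary whenever $F_0$ is differentiable there, or Clarke stationary in general. If one instead assumes $\nabla F_\sigma\to\nabla F_0$ locally uniformly, which holds for Gaussian smoothing of a $C^1$ log-density, the conclusion is immediate from $\nabla F_{\sigma_k}(x_k)\to0$. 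Full-sequence convergence, rather than subsequential, would follow if the limiting stationary set is isolated in the tracked basin, and I would state it under that proviso.
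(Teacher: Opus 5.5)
Your proof is correct under the extra local hypotheses you state, but it is organized differently from the paper's.

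\textbf{What the paper does.} The appendix writes no recursion. It adds a summable-drift condition $\sum_k\Delta_k<\infty$ with $\Delta_k\ge\sup_{\mathcal{B}_{\sigma_k}\cap\mathcal{B}_{\sigma_{k+1}}}|F_{\sigma_{k+1}}-F_{\sigma_k}|$. It then asserts that fixed-$\sigma$ descent survives the passage between basins ``up to the summable drift terms''. Finally it asserts that accumulation points in the tracked basin are stationary for $F_0$ wherever $F_0$ is differentiable, and limiting-stationary otherwise.

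\textbf{What you do differently.} You run an explicit PL-gap recursion $e_{k+1}\le(1-\gamma_k)e_k+2\delta_k$. This needs only $\delta_k\to0$, which you correctly derive from the theorem's own hypothesis of locally uniform convergence on a fixed compact set. The price is two extra assumptions:
\begin{itemize}
\item a uniform contraction rate, or $\sum_k\gamma_k=\infty$ with $\delta_k/\gamma_k\to0$;
\item placing the level-$\sigma_{k+1}$ minimizer in the overlap, so that $F^\star_{\sigma_k}-F^\star_{\sigma_{k+1}}\le\delta_k$.
\end{itemize}
The paper never mentions this drift of the minimum values. Summability is what lets one telescope the descent inequality without any PL rate. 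Vanishing drift plus a linear rate is closer to what the theorem actually assumes.

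\textbf{The limit step.} Your final step is more careful than the paper's. Locally uniform convergence of function values does not carry $\nabla F_{\sigma_k}(x_k)\to0$ over to $\nabla F_0(\bar x)=0$, and the paper asserts this step without justification. Your variational argument supplies it: near-minimality of $x_k$ on balls of a fixed radius passes to the limit, so $\bar x$ is a local minimizer of $F_0$. This does require the extra assumption that the basins contain such balls around the iterates. Quadratic growth is not needed there.

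\textbf{One slip.} Your ``alternatively, sum the descent inequality'' step needs $\sum_k\delta_k<\infty$, which is the paper's condition and which you did not assume. Under merely vanishing drift, use the one-step bound
\[
a_{\sigma_k}\norm{\nabla f(x_k)}^2+b_{\sigma_k}\norm{\nabla g_{\sigma_k}(z_k)}^2\le F_{\sigma_k}(x_k)-F_{\sigma_k}(x_{k+1})\le e_k ,
\]
which is valid because $x_{k+1}\in\mathcal{B}_{\sigma_k}$. Going from there to $\norm{\nabla F_{\sigma_k}(x_k)}\to0$ also needs $\alpha L_{g,\sigma_k}$ bounded, which can fail as $\sigma_k\downarrow0$. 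Your main line goes through $e_k\to0$ and the variational limit, so neither point affects the conclusion.
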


Proofs and sufficient basin-tracking conditions are in Appendix~\ref{app:section4-proofs}. Thus, locally, the two directions optimize a smoothed measurement-plus-prior objective while the annealing schedule transports this interpretation across noise levels.

\subsection{Correspondence to Bayes-rule solvers}

Bayes-rule-based posterior solvers use the conditional score decomposition
\begin{equation}
    \nabla_{x_t}\log p_{\sigma_t}(x_t\mid y)
    =
    \nabla_{x_t}\log p_{\sigma_t}(x_t)
    +
    \nabla_{x_t}\log p_{\sigma_t}(y\mid x_t).
    \label{eq:section4-bayes-score}
\end{equation}
Each term can be read as an approximate oracle for one subproblem. The prior term is
\begin{equation}
    \nabla_{x_t}\log p_{\sigma_t}(x_t)
    =
    s_{\sigma_t}(x_t)
    =
    -\nabla \phi_{\sigma_t}(x_t).
\end{equation}
The first-order prior oracle is therefore a score step:
\begin{equation}
    \mathcal{P}_{\eta_p,\sigma_t}(x)
    \;\approx\;
    x-\eta_p\nabla g_{\sigma_t}(x)
    =
    x+\eta_p\lambda s_{\sigma_t}(x).
    \label{eq:prior-oracle-score}
\end{equation}
The likelihood term gives the measurement oracle. For Gaussian measurements, a direct gradient step on $f(x;y,A)=\frac{1}{2\sigma_y^2}\norm{A(x)-y}^2$ recovers Score-ALD-style correction, while DPS uses the denoised surrogate $\ell_y(\hat{x}_0(x_t))$ \citep{chung2023dps}. In this view, different solvers instantiate the same prior-versus-measurement split with different approximate oracles. This separation also clarifies where hallucination enters: the measurement oracle anchors the reconstruction to $(y,A)$, whereas the prior oracle fills structure that may be weakly constrained or absent in the measurement. Section~5 therefore keeps the measurement oracle fixed and robustifies the prior oracle.

\section{RPU: Robust Prior Update}
\label{sec:rpu}

\begin{algorithm}[t]
\caption{DPS with Robust Prior Update (RPU)}
\label{alg:rpu}
\begin{algorithmic}[1]
\REQUIRE measurement $y$, operator $A$, diffusion model $\epsilon_\theta$, steps $N$, DPS step sizes $\{\zeta_i\}_{i=1}^N$, reverse variances $\{\tilde{\sigma}_i\}_{i=1}^N$, RPU scale $\gamma$, inner steps $K$
\STATE $x_N \sim \mathcal{N}(0,I)$
\FOR{$i=N,\ldots,1$}
    \STATE $\gamma_i \gets \gamma \tilde{\sigma}_i$, \quad $x_i^{(0)} \gets x_i$
    \FOR{$j=0,\ldots,K-1$}
        \STATE $d_j \gets m_\theta(x_i^{(j)},i)-x_i^{(j)}$
        \STATE $x_i^{(j+1)} \gets x_i^{(j)} - \dfrac{\gamma_i}{K}\dfrac{d_j}{\|d_j\|+\varepsilon}$
    \ENDFOR
    \STATE $x_i^{\mathrm{adv}} \gets x_i^{(K)}$
    \STATE $\epsilon_i^{\mathrm{adv}} \gets \epsilon_\theta(x_i^{\mathrm{adv}},i)$
    \STATE $\hat{x}_{0,i}^{\mathrm{RPU}} \gets \mathrm{Tweedie}(x_i,\epsilon_i^{\mathrm{adv}},i)$
    \STATE $\tilde{x}_{i-1}^{\mathrm{prior}} \gets x_i+\big(p_\theta(x_i^{\mathrm{adv}},i)-x_i^{\mathrm{adv}}\big)$
    \STATE $x_{i-1}\gets \tilde{x}_{i-1}^{\mathrm{prior}}
    -\zeta_i\nabla_{x_i}\|y-A(\hat{x}_{0,i}^{\mathrm{RPU}})\|_2^2$
\ENDFOR
\RETURN $\hat{x}_{0,0}^{\mathrm{RPU}}$
\end{algorithmic}
\end{algorithm}

Section~\ref{sec:alternating} shows that Bayes-rule-based diffusion inverse solvers alternate between a prior update and a measurement-conditioning step. RPU intervenes only in the prior update. The measurement operator, data-fidelity loss, and conditioning rule are kept unchanged. This makes RPU, as shown in Figure~\ref{fig:intro-roadmap}B, applicable beyond DPS in principle.

Let $x_t$ be the current reverse-diffusion iterate. Before measurement conditioning, the unconditional diffusion model proposes a native reverse transition. We denote its posterior mean and sampled reverse step by
\begin{equation}
    m_\theta(x_t,t)
    =
    \mathbb{E}_\theta[x_{t-1}\mid x_t],
    \qquad
    p_\theta(x_t,t)
    \sim
    p_\theta(x_{t-1}\mid x_t).
    \label{eq:native-prior-transition}
\end{equation}
The corresponding local prior displacement is
\begin{equation}
    d_\theta(x_t,t)
    =
    m_\theta(x_t,t)-x_t .
    \label{eq:native-prior-displacement}
\end{equation}
Vanilla DPS directly uses this native prior proposal before applying the measurement correction. RPU instead probes whether this prior motion is locally stable before accepting the update.

Starting from $x_t^{(0)}=x_t$, RPU moves a small distance against the normalized prior displacement:
\begin{equation}
    x_t^{(j+1)}
    =
    x_t^{(j)}
    -
    \frac{\gamma_t}{K}
    \frac{d_\theta(x_t^{(j)},t)}
    {\norm{d_\theta(x_t^{(j)},t)}+\epsilon},
    \qquad
    j=0,\ldots,K-1.
    \label{eq:rpu-probe}
\end{equation}
Here $K$ is the number of probe steps and $\gamma_t$ is the perturbation radius. In our implementation,
\begin{equation}
    \gamma_t
    =
    \gamma\sqrt{\mathrm{Var}(x_{t-1}\mid x_t)},
    \label{eq:probe-radius}
\end{equation}
so the probe scale follows the native reverse-step variance. The resulting point $x_t^{\rm adv}=x_t^{(K)}$ is used only to evaluate the local behavior of the diffusion prior; it does not replace the current iterate.

RPU then evaluates the diffusion prior at $x_t^{\rm adv}$ but transfers only the resulting displacement back to the original anchor $x_t$:
\begin{equation}
    \tilde{x}_{t-1}^{\rm prior}
    =
    x_t
    +
    \left(
    p_\theta(x_t^{\rm adv},t)-x_t^{\rm adv}
    \right).
    \label{eq:rpu-reanchor}
\end{equation}
This re-anchoring is the main difference from a direct adversarial perturbation of the sample. The perturbed point is used to obtain a more conservative prior displacement, while the reverse trajectory remains anchored at $x_t$ before measurement conditioning. The denoised estimate used in the measurement residual is computed consistently from the probed model output:
\begin{equation}
    \hat{x}_{0,t}^{\rm RPU}
    =
    \mathrm{Tweedie}\!\left(x_t,\epsilon_\theta(x_t^{\rm adv},t),t\right).
    \label{eq:rpu-robust-x0hat}
\end{equation}
The final conditioning step is therefore
\begin{equation}
    x_{t-1}
    =
    \tilde{x}_{t-1}^{\rm prior}
    -
    \zeta_t
    \nabla_{x_t}
    \left\|
    y-A\!\left(\hat{x}_{0,t}^{\rm RPU}\right)
    \right\|_2^2 .
    \label{eq:rpu-dps-conditioning}
\end{equation}
Thus, RPU changes the prior sample and the denoised estimate supplied to DPS, but leaves the DPS measurement correction itself unchanged.

\section{Experiments}

\subsection{Setup}
We evaluate RPU on two datasets with different roles. FFHQ is the primary benchmark for the main empirical claims. For FFHQ, we use 1,000 images at $256\times256$ resolution for each degradation and evaluate three inverse problems: box inpainting with a square mask length in $[128,129]$, Gaussian deblurring with kernel size 61 and intensity 15.0, and motion deblurring with kernel size 61 and intensity 0.5. All three FFHQ settings use Gaussian measurement noise with $\sigma_y=0.05$.

ImageNet is included as a secondary transfer check rather than as evidence for a universal improvement claim. In the main text, we report ImageNet Gaussian deblurring only, using 1,000 images at $256\times256$ resolution, kernel size 61, blur intensity 2.0, and Gaussian measurement noise $\sigma_y=0.05$. Additional ImageNet rows and qualitative examples are reported in Appendix~\ref{app:imagenet-secondary}.

\subsection{Evaluation Metrics}

We report PSNR and LPIPS~\citep{zhang2018unreasonable} against the ground truth as instance-level reconstruction metrics, and FID~\citep{heusel2017gans} as a distribution-level realism metric. Since hallucination in inverse problems is an instance-level faithfulness failure, FID is not used as a standalone measure of faithfulness: a reconstruction may look distributionally plausible while changing identity or introducing unsupported structures.

We further conduct human faithfulness studies to directly compare reconstruction fidelity. In the blind pairwise protocol, readers compare DPS and RPU outputs without seeing the ground truth. In the with-GT protocol, readers compare both reconstructions against the ground-truth image. For each protocol, we decode reader choices into DPS, RPU, or tie, and report the majority label per image, the non-tie rate, the RPU share among non-tie majority cases, and a two-sided binomial test over non-tie majority cases. Full details of the reader interface, randomization, decoding procedure, and statistical test are provided in Appendix~\ref{app:human-details}.

\subsection{Results}

We discuss four settings in parallel: FFHQ box inpainting, FFHQ Gaussian deblurring, FFHQ motion deblurring, and ImageNet Gaussian deblurring. FFHQ automatic metrics are summarized in Table~\ref{tab:ffhq_inverse_results}, FFHQ human faithfulness judgments are summarized in Table~\ref{tab:ffhq_reader_study}, and the ImageNet Gaussian result is summarized separately in Table~\ref{tab:imagenet_gaussian_transfer}.

\begin{table*}[t]
  \centering
  \caption{Quantitative evaluation of inverse problem solving on the FFHQ 256$\times$256-1k validation set.
  RPU uses the same DPS measurement update and changes only the prior-side proposal. PSNR and LPIPS are instance-level metrics; FID is a distribution-level metric and should be read together with the human faithfulness results.
  \textbf{Bold} indicates the best result in each metric among the compared methods.}
  \label{tab:ffhq_inverse_results}
  \resizebox{\textwidth}{!}{
  \begin{tabular}{lccccccccc}
  \toprule
  \multirow{2}{*}{\textbf{FFHQ}}
  & \multicolumn{3}{c}{\textbf{Inpaint (box)}}
  & \multicolumn{3}{c}{\textbf{Deblur (Gaussian)}}
  & \multicolumn{3}{c}{\textbf{Deblur (motion)}} \\
  \cmidrule(lr){2-4}
  \cmidrule(lr){5-7}
  \cmidrule(lr){8-10}
  & \textbf{FID $\downarrow$} & \textbf{LPIPS $\downarrow$} & \textbf{PSNR $\uparrow$}
  & \textbf{FID $\downarrow$} & \textbf{LPIPS $\downarrow$} & \textbf{PSNR $\uparrow$}
  & \textbf{FID $\downarrow$} & \textbf{LPIPS $\downarrow$} & \textbf{PSNR $\uparrow$} \\
  \midrule
  DPS
  & \textbf{45.44} & 0.400 & 19.72
  & \textbf{33.54} & 0.462 & 17.71
  & \textbf{35.90} & 0.440 & 18.56 \\

  RPU (ours)
  & 50.35 & \textbf{0.394} & \textbf{20.40}
  & 36.61 & \textbf{0.455} & \textbf{17.84}
  & 39.05 & \textbf{0.436} & \textbf{18.69} \\
  \bottomrule
  \end{tabular}
  }
\end{table*}

\begin{table*}[h]
  \centering
  \caption{Human faithfulness evaluation on FFHQ 256$\times$256-1k. ``RPU'' and ``DPS'' count images for which a strict reader majority prefers RPU or DPS, respectively, while ``Tie'' counts images without a strict method preference. ``Non-tie'' is the fraction of images with a strict preference for either method, and ``RPU Share'' is the fraction of non-tie cases assigned to RPU. The $p$-value is from a two-sided binomial test of RPU-vs-DPS preference balance over non-tie cases.}
  \label{tab:ffhq_reader_study}
  \setlength{\tabcolsep}{5pt}
  \renewcommand{\arraystretch}{1.08}
  \resizebox{\textwidth}{!}{
  \begin{tabular}{llrrrrrr}
  \toprule
  \textbf{Task} & \textbf{Protocol}
  & \textbf{RPU} & \textbf{DPS} & \textbf{Tie}
  & \textbf{Non-tie}
  & \textbf{RPU Share}
  & \textbf{$p$-value} \\
  \midrule
  Inpaint (box)
  & Blind pair
  & \textbf{68} & 6 & 926
  & 7.4\% & \textbf{91.9\%}
  & $2.14{\times}10^{-14}$ \\

  & With GT
  & \textbf{102} & 10 & 888
  & 11.2\% & \textbf{91.1\%}
  & $7.12{\times}10^{-16}$ \\

  \midrule
  Deblur (Gaussian)
  & Blind pair
  & 7 & 6 & 987
  & 1.3\% & 53.8\%
  & $1.00$ \\

  & With GT
  & \textbf{25} & 4 & 971
  & 2.9\% & \textbf{86.2\%}
  & $1.04{\times}10^{-4}$ \\

  \midrule
  Deblur (motion)
  & Blind pair
  & 2 & 4 & 994
  & 0.6\% & 33.3\%
  & $0.688$ \\

  & With GT
  & \textbf{23} & 5 & 972
  & 2.8\% & \textbf{82.1\%}
  & $9.12{\times}10^{-4}$ \\
  \bottomrule
  \end{tabular}
  }
\end{table*}

\paragraph{FFHQ box inpainting.}
In box inpainting, the mask leaves a large weakly constrained region, giving the diffusion prior substantial room to fill identity-relevant content. In Table~\ref{tab:ffhq_inverse_results}, RPU improves PSNR from 19.72 to 20.40 and LPIPS from 0.400 to 0.394, while FID changes from 45.44 to 50.35. In the human study, RPU is preferred in 68 versus 6 blind non-tie majority cases and 102 versus 10 with-GT non-tie majority cases. These results match the intended role of RPU: improving instance faithfulness when the prior step shapes underconstrained content.

\paragraph{FFHQ Gaussian deblurring.}
In Gaussian deblurring, RPU improves PSNR from 17.71 to 17.84 and LPIPS from 0.462 to 0.455, while FID changes from 33.54 to 36.61. Blind pairwise judgments are mostly ties, with 7 RPU majorities and 6 DPS majorities. With ground truth, readers prefer RPU in 25 versus 4 non-tie majority cases, corresponding to a 2.9\% non-tie rate and an 86.2\% RPU share among non-tie cases. The ground-truth-assisted protocol therefore captures reconstruction differences that are usually not apparent in blind comparison.

\paragraph{FFHQ motion deblurring.}
In motion deblurring, RPU improves PSNR from 18.56 to 18.69 and LPIPS from 0.440 to 0.436, while FID changes from 35.90 to 39.05. The blind pairwise study has 2 RPU majorities, 4 DPS majorities, and 994 ties. With ground truth, readers prefer RPU in 23 versus 5 non-tie majority cases, corresponding to a 2.8\% non-tie rate and an 82.1\% RPU share among non-tie cases. As in Gaussian deblurring, the with-GT protocol makes small instance-level differences easier to judge. Additional FFHQ examples are shown in Appendix~\ref{app:ffhq-morefigures}.

\begin{figure}[t]
  \centering
  \includegraphics[width=0.7\textwidth]{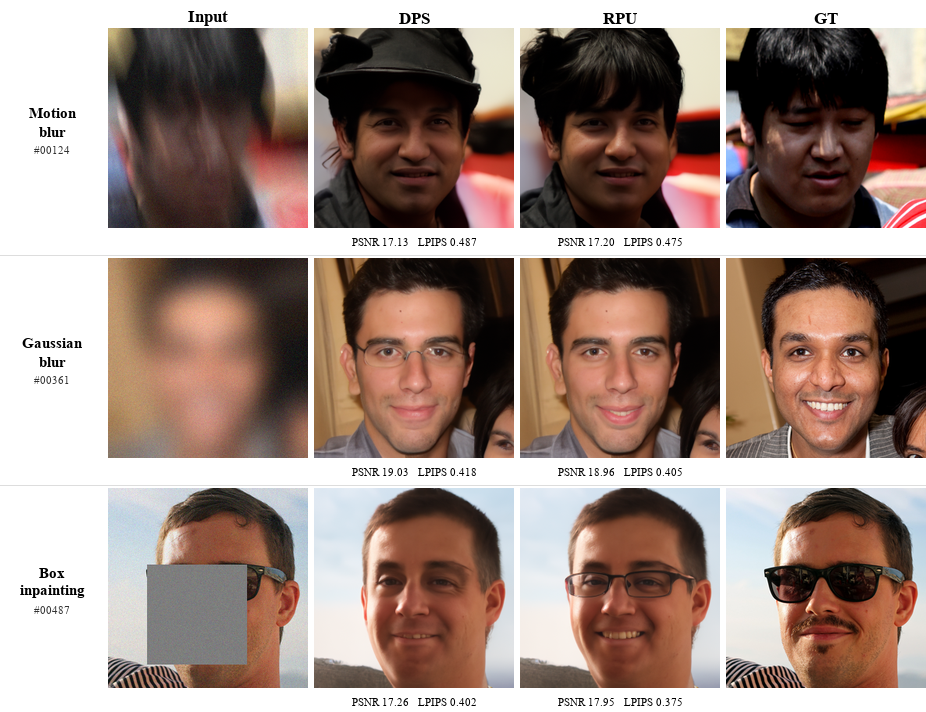}
  \caption{Qualitative examples motivating human faithfulness evaluation on FFHQ 256$\times$256-1k.
Although DPS and RPU can obtain similar PSNR and LPIPS values, human readers identify visible differences in reconstruction faithfulness.
In these examples, RPU reduces measurement-inconsistent facial details and hallucinated structures relative to DPS while preserving comparable visual realism. Rows show motion deblurring, Gaussian deblurring, and box inpainting; columns show the corrupted input, DPS, RPU, and ground truth.}
 \label{fig:ffhq_human_eval_examples}
\end{figure}

\paragraph{ImageNet Gaussian transfer check.}
On ImageNet Gaussian deblurring, RPU uses the same modification and improves PSNR from 20.406 to 20.509. LPIPS changes from 0.5342 to 0.5491 and FID changes from 83.985 to 87.296. In the three-reader with-GT study, RPU wins 8 non-tie majority cases and DPS wins 3, with 989 ties and $p=0.2266$. We view this result as a natural transfer check on a broader image distribution. Additional ImageNet quantitative rows and qualitative examples are shown in Appendix~\ref{app:imagenet-secondary}, including representative Gaussian deblurring cases in Figure~\ref{fig:imagenet-gaussian-examples}.

\begin{table*}[t]
  \centering
  \caption{ImageNet Gaussian deblurring result. The human result uses the updated three-reader with-GT study. ``R/D/T'' reports RPU, DPS, and tie majority counts.}
  \label{tab:imagenet_gaussian_transfer}
  \setlength{\tabcolsep}{4pt}
  \renewcommand{\arraystretch}{1.08}
  \resizebox{\textwidth}{!}{
  \begin{tabular}{lrrrrrrcrrrr}
  \toprule
  \textbf{Task}
  & \textbf{DPS PSNR} & \textbf{RPU PSNR}
  & \textbf{DPS LPIPS} & \textbf{RPU LPIPS}
  & \textbf{DPS FID} & \textbf{RPU FID}
  & \textbf{Readers}
  & \textbf{GT R/D/T}
  & \textbf{Non-tie}
  & \textbf{RPU Share}
  & \textbf{$p$-value} \\
  \midrule
  ImageNet Gaussian
  & 20.406 & 20.509
  & 0.5342 & 0.5491
  & 83.985 & 87.296
  & 3
  & 8/3/989
  & 1.1\%
  & 72.7\%
  & 0.2266 \\
  \bottomrule
  \end{tabular}
  }
\end{table*}

\begin{figure}[t]
  \centering
  \includegraphics[width=0.7\textwidth]{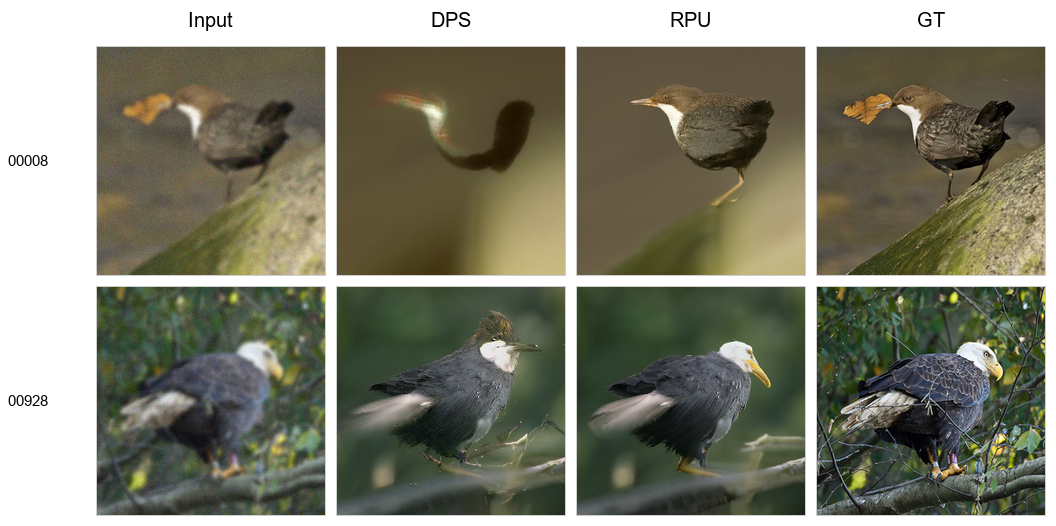}
  \caption{Representative ImageNet Gaussian deblurring examples from the ground-truth-assisted reader study. Each row shows the corrupted input, DPS reconstruction, RPU reconstruction, and ground truth.}
  \label{fig:imagenet-gaussian-examples}
\end{figure}

The qualitative evidence also explains why FID alone can be misleading: sharp, face-like DPS outputs may still change identity-relevant details, while RPU can look slightly smoother but better preserve instance-level content. The main empirical claim therefore rests on automatic metrics, reader-study preferences, and visual inspection rather than on any single score.

\section{Discussion, Limitations and Conclusion}

\paragraph{Discussion.}
Hallucination in diffusion inverse problems is not only an output-level failure; it can be shaped by the split between the diffusion-prior proposal and the measurement-conditioning step. The alternating-optimization view makes this split explicit and identifies the prior proposal as the intervention point. RPU probes and re-anchors this prior-side update while leaving the DPS measurement condition unchanged, so its effect is isolated to how generative content is injected before measurement correction.

The experiments support this mechanism. On FFHQ, RPU improves PSNR and LPIPS across box inpainting, Gaussian deblurring, and motion deblurring, while the reader studies show instance-level faithfulness gains that automatic metrics alone do not capture. The ImageNet Gaussian experiment further indicates that the same update can transfer to a broader object distribution. These results support RPU as a targeted faithfulness intervention, not as a replacement for reconstruction metrics or human inspection.

\paragraph{Limitations.}
RPU is not a standalone hallucination detector; it modifies the reverse update but does not label individual reconstructions as hallucinated. The empirical evidence is also task dependent: box inpainting is strongly underconstrained, whereas blur tasks and ImageNet yield many reader-study ties. Broader claims require more operators, datasets, reader populations, and integrations beyond the DPS instantiation studied here.

\paragraph{Conclusion.}
We defined inverse-problem hallucination as measurement-conditioned unfaithfulness, reinterpreted Bayes-rule-based solvers as alternating prior and measurement steps, and proposed RPU to robustify the prior update while keeping measurement conditioning fixed. The resulting DPS instantiation improves FFHQ metrics and produces human-preferred reconstructions, suggesting robust prior updates as a practical path toward more faithful diffusion inverse solvers.

\section*{Acknowledgment}

This work was supported by the National Institutes of Health (NIH) under award number R01HL159183. 

\bibliographystyle{unsrt}
\bibliography{references}

\newpage
\appendix

\section{Proofs and Details for Section~\ref{sec:alternating}}
\label{app:section4-proofs}

\subsection{Proof of Proposition~\ref{prop:fixedsigma-descent}}

\begin{proof}
Fix $\sigma$ and write $g=g_\sigma$, $L_g=L_{g,\sigma}$, and $F=f+g$. The split update is
\begin{equation}
    z=x-\alpha\nabla f(x),
    \qquad
    x^+=z-\beta\nabla g(z).
\end{equation}
By the descent lemma for $f$,
\begin{equation}
    f(z)-f(x)
    \le
    -\alpha\left(1-\frac{\alpha L_f}{2}\right)
    \norm{\nabla f(x)}^2 .
    \label{eq:app-f-step}
\end{equation}
By the descent lemma for $g$ across the same displacement,
\begin{equation}
    g(z)-g(x)
    \le
    -\alpha\ip{\nabla g(x)}{\nabla f(x)}
    +
    \frac{\alpha^2 L_g}{2}
    \norm{\nabla f(x)}^2 .
\end{equation}
Using Assumption~\ref{assump:fixedsigma},
\begin{equation}
    g(z)-g(x)
    \le
    \left(
    \alpha\kappa_f
    +
    \frac{\alpha^2 L_g}{2}
    \right)
    \norm{\nabla f(x)}^2 .
    \label{eq:app-g-drift}
\end{equation}
Similarly, across the prior step,
\begin{equation}
    f(x^+)-f(z)
    \le
    -\beta\ip{\nabla f(z)}{\nabla g(z)}
    +
    \frac{\beta^2 L_f}{2}
    \norm{\nabla g(z)}^2
    \le
    \left(
    \beta\kappa_g
    +
    \frac{\beta^2 L_f}{2}
    \right)
    \norm{\nabla g(z)}^2 ,
    \label{eq:app-f-drift}
\end{equation}
and
\begin{equation}
    g(x^+)-g(z)
    \le
    -\beta\left(1-\frac{\beta L_g}{2}\right)
    \norm{\nabla g(z)}^2 .
    \label{eq:app-g-step}
\end{equation}
Adding \eqref{eq:app-f-step}--\eqref{eq:app-g-step} gives
\begin{equation}
    F(x^+) \le F(x)
    -
    a_\sigma\norm{\nabla f(x)}^2
    -
    b_\sigma\norm{\nabla g(z)}^2 ,
\end{equation}
with $a_\sigma$ and $b_\sigma$ as defined in Proposition~\ref{prop:fixedsigma-descent}. This proves \eqref{eq:local-split-descent}.

For the contraction claim, note that
\begin{align}
    \norm{\nabla F(x)}
    &=
    \norm{\nabla f(x)+\nabla g(x)} \\
    &\le
    \norm{\nabla f(x)}
    +
    \norm{\nabla g(z)}
    +
    \norm{\nabla g(x)-\nabla g(z)} \\
    &\le
    (1+\alpha L_g)\norm{\nabla f(x)}
    +
    \norm{\nabla g(z)} .
\end{align}
Therefore
\begin{equation}
    \norm{\nabla F(x)}^2
    \le
    2(1+\alpha L_g)^2\norm{\nabla f(x)}^2
    +
    2\norm{\nabla g(z)}^2 .
\end{equation}
With
\begin{equation}
    c_\sigma =
    \min\left\{
    \frac{a_\sigma}{2(1+\alpha L_g)^2},
    \frac{b_\sigma}{2}
    \right\},
\end{equation}
the descent inequality implies
\begin{equation}
    F(x^+) \le F(x)-c_\sigma\norm{\nabla F(x)}^2 .
\end{equation}
If $F$ satisfies the local PL inequality
\begin{equation}
    \frac{1}{2}\norm{\nabla F(x)}^2
    \ge
    \mu_\sigma\left(F(x)-F_\sigma^\star\right),
\end{equation}
then
\begin{equation}
    F(x^+)-F_\sigma^\star
    \le
    (1-2\mu_\sigma c_\sigma)
    \left(F(x)-F_\sigma^\star\right),
\end{equation}
which proves \eqref{eq:fixed-sigma-contraction}.
\end{proof}

\subsection{Continuation tracking conditions}
\label{app:continuation-tracking}

Theorem~\ref{thm:continuation-tracking} is a conditional basin-tracking theorem rather than a global convergence theorem. One sufficient set of conditions is the following. Let $\mathcal{B}_{\sigma_k}$ be local basins for the smoothed objectives $F_{\sigma_k}$. Assume:
\begin{enumerate}[leftmargin=1.5em]
    \item the fixed-$\sigma_k$ contraction in Proposition~\ref{prop:fixedsigma-descent} holds uniformly inside $\mathcal{B}_{\sigma_k}$;
    \item consecutive basins overlap, and the output of the $\sigma_k$ stage lies inside $\mathcal{B}_{\sigma_{k+1}}$;
    \item the objective drift is controlled on the relevant basin, for example
    \begin{equation}
        \sup_{x\in \mathcal{B}_{\sigma_k}\cap\mathcal{B}_{\sigma_{k+1}}}
        \left|
        F_{\sigma_{k+1}}(x)-F_{\sigma_k}(x)
        \right|
        \le
        \Delta_k,
        \qquad
        \sum_k \Delta_k < \infty;
    \end{equation}
    \item $F_{\sigma_k}\to F_0$ locally uniformly as $\sigma_k\downarrow 0$.
\end{enumerate}
Under these assumptions, the iterate can be passed from one local basin to the next while preserving the fixed-$\sigma$ descent behavior up to the summable drift terms. Thus the continuation procedure tracks a path of local stationary regions for the smoothed objectives. In the limit $\sigma_k\downarrow 0$, any accumulation point that remains in the tracked basin is a stationary point of the limiting objective $F_0$, provided the limiting objective is differentiable at that point. If $F_0$ is nonsmooth, the same argument should be read as convergence to the corresponding local limiting-stationarity condition.

\section{FFHQ results} \label{app:ffhq-morefigures}

\begin{figure}[t]
  \centering
  \includegraphics[width=\textwidth]{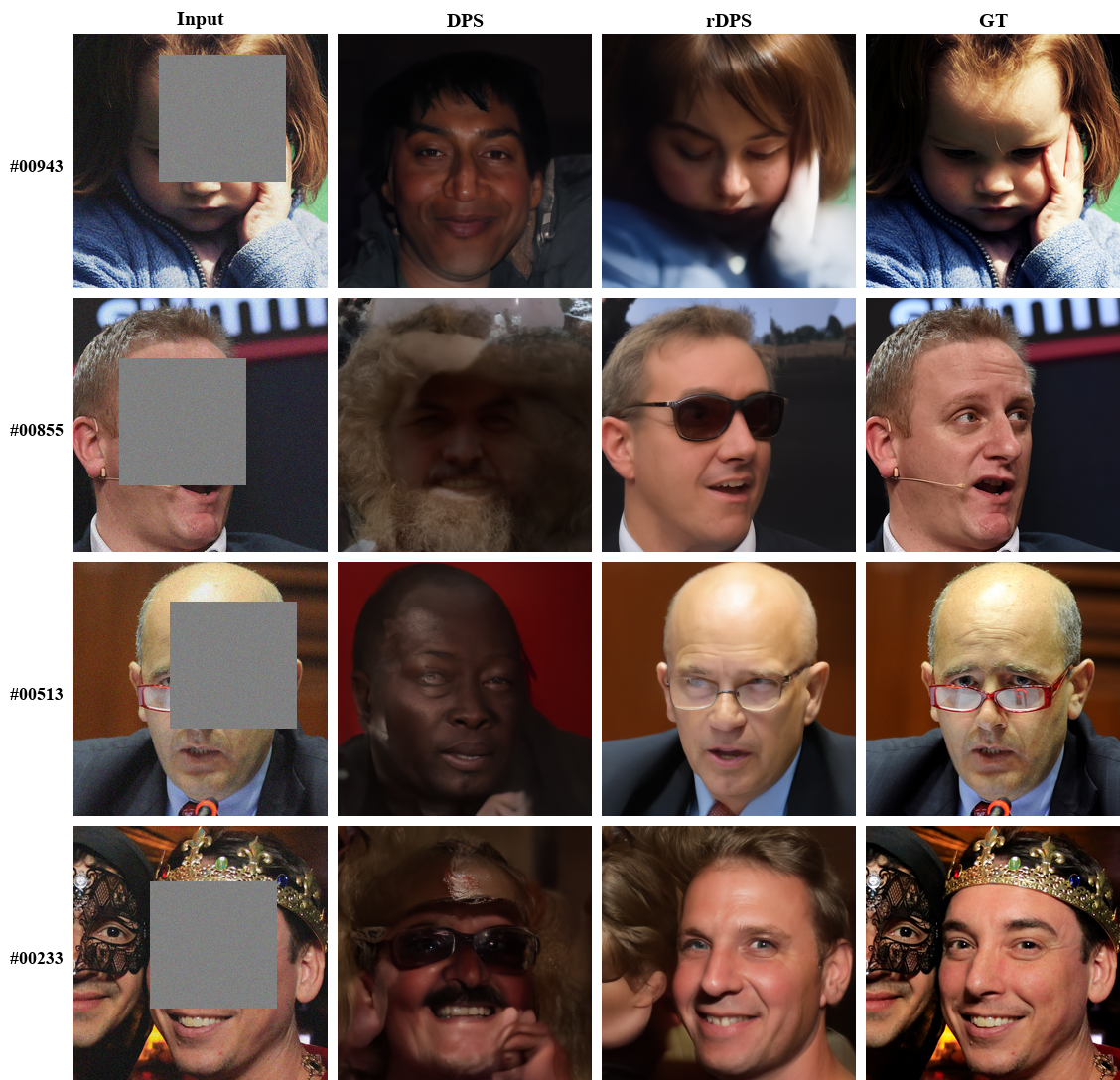}
  \caption{Qualitative FFHQ box-inpainting examples. Each row shows the corrupted input, DPS reconstruction, RPU reconstruction, and ground truth. Rows are selected from cases with unanimous RPU preference in the ground-truth-assisted reader study and high DPS--RPU output divergence. The figure is intended as visual evidence for the mechanism behind the inpainting result, not as a substitute for the aggregate reader-study statistics in Table~\ref{tab:ffhq_reader_study}.}
  \label{fig:qual-inpaint}
\end{figure}

\begin{figure}[t]
  \centering
  \includegraphics[width=\textwidth]{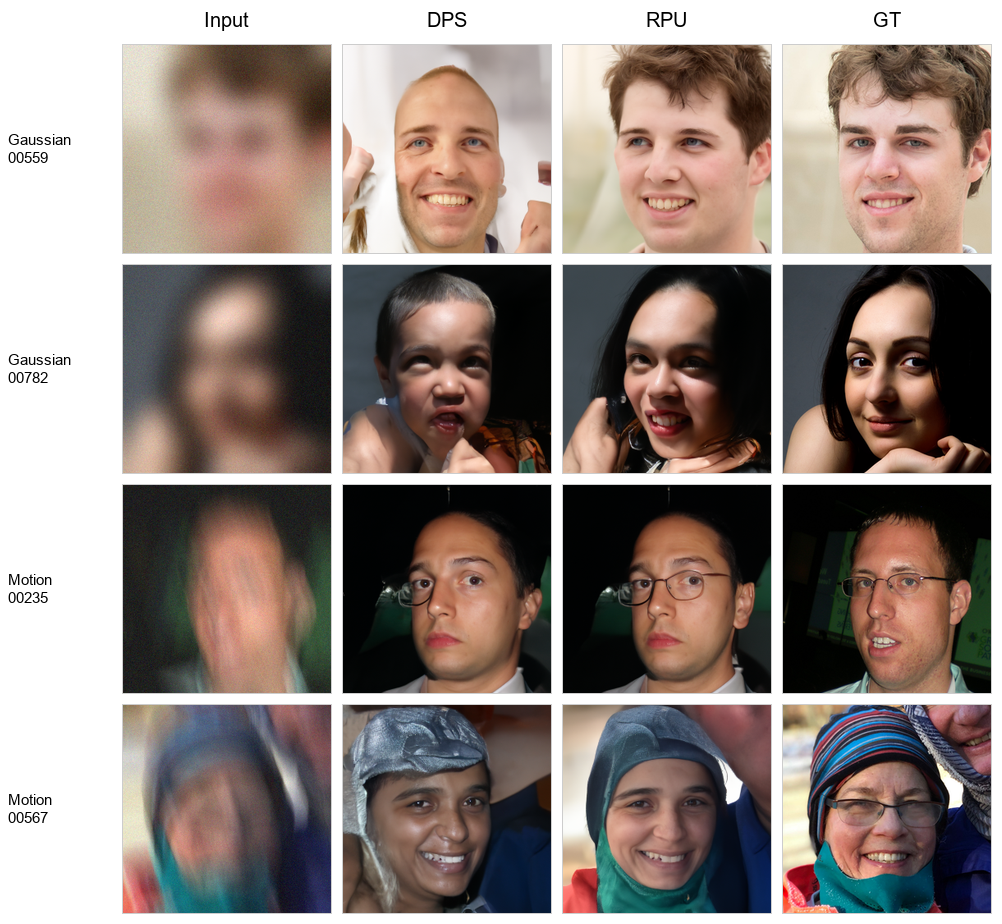}
  \caption{Additional FFHQ blur examples from ground-truth-assisted reader-study cases. Each row shows the corrupted input, DPS reconstruction, RPU reconstruction, and ground truth for Gaussian or motion deblurring. These examples complement the aggregate FFHQ results and illustrate cases where RPU changes the prior-side reconstruction while using the same measurement condition as DPS.}
  \label{fig:ffhq-blur-appendix-examples}
\end{figure}

\section{ImageNet results} \label{app:imagenet-secondary}

ImageNet is used as a secondary transfer check. Table~\ref{tab:imagenet-secondary-results} reports the same DPS--RPU comparison for Gaussian deblurring, motion deblurring, and box inpainting. The main text reports Gaussian deblurring because it has the updated three-reader with-GT study; the additional rows are included here to document the broader ImageNet behavior without changing the main empirical focus.

\begin{table}[h]
\centering
\caption{Secondary ImageNet 256$\times$256-1k results. ``GT R/D/T'' reports RPU, DPS, and tie majority counts in the ground-truth-assisted reader study when available.}
\label{tab:imagenet-secondary-results}
\small
\setlength{\tabcolsep}{3.5pt}
\resizebox{\textwidth}{!}{
\begin{tabular}{lrrrrrrcrrrr}
\toprule
\textbf{Task}
& \textbf{DPS PSNR} & \textbf{RPU PSNR}
& \textbf{DPS LPIPS} & \textbf{RPU LPIPS}
& \textbf{DPS FID} & \textbf{RPU FID}
& \textbf{Readers}
& \textbf{GT R/D/T}
& \textbf{Non-tie}
& \textbf{RPU Share}
& \textbf{$p$-value} \\
\midrule
Gaussian blur
& 20.406 & 20.509
& 0.5342 & 0.5491
& 83.985 & 87.296
& 3
& 8/3/989
& 1.1\%
& 72.7\%
& 0.2266 \\
Motion blur
& 19.804 & 19.954
& 0.5381 & 0.5523
& 81.345 & 85.744
& 2
& 5/6/989
& 1.1\%
& 45.5\%
& 1.0000 \\
Box inpainting
& 21.001 & 21.059
& 0.5357 & 0.5485
& 94.523 & 98.180
& 2
& 20/16/964
& 3.6\%
& 55.6\%
& 0.6177 \\
\bottomrule
\end{tabular}
}
\end{table}

\section{Reader Study Details}
\label{app:human-details}

\paragraph{Reader-study protocols.}
Each FFHQ reader study contains 1,000 items and three readers. In the blind pairwise protocol, readers compare the DPS and RPU outputs without seeing ground truth. In the with-GT protocol, readers compare both reconstructions against the ground-truth image. Method identities are decoded using the saved \texttt{key.csv} files rather than image order. The key-alignment audit reports 1,000 unique key rows, 1,000 rows per reader, no missing or extra reader rows, and aligned keys for all six FFHQ studies.

\paragraph{Blind pairwise rubric.}
For each item, the reader opens the paired reconstruction image and enters one score in \texttt{responses\_template.csv}. The method key is hidden during annotation. Table~\ref{tab:blind-rubric} gives the exact rubric. Scores 1 and 2 are decoded through \texttt{key.csv}: score 1 selects the left method and score 2 selects the right method. Scores 0 and 3 are both treated as tie at the decoded preference level, but they are preserved separately in the raw rating-category counts.

\begin{table}[h]
\centering
\caption{Blind pairwise reader-study rubric. Readers see only the two reconstructions and do not see method identities or ground truth.}
\label{tab:blind-rubric}
\small
\begin{tabular}{cll}
\toprule
Score & Reader-facing meaning & Decoded preference use \\
\midrule
0 & Visually same & Tie \\
1 & Left better & Method on left in \texttt{key.csv} \\
2 & Right better & Method on right in \texttt{key.csv} \\
3 & Different, but cannot say which is better & Tie \\
\bottomrule
\end{tabular}
\end{table}

\paragraph{Ground-truth-assisted rubric.}
For the with-GT protocol, each panel is ordered as input, reconstruction A, reconstruction B, and ground truth. The input image gives the degraded-measurement context, while the ground truth is used to judge instance faithfulness. Table~\ref{tab:withgt-rubric} gives the exact rubric. Scores 1 and 2 are decoded through \texttt{key.csv}: score 1 selects method A and score 2 selects method B. Scores 0, 3, and 4 are treated as tie at the decoded preference level because they do not choose one method over the other, but the raw categories are kept to distinguish ``both faithful'', ``both unfaithful and similar'', and ``both unfaithful and different'' cases.

\begin{table}[h]
\centering
\caption{Ground-truth-assisted reader-study rubric. Panels are ordered as input, A, B, and ground truth.}
\label{tab:withgt-rubric}
\small
\resizebox{\textwidth}{!}{
\begin{tabular}{cll}
\toprule
Score & Reader-facing meaning & Decoded preference use \\
\midrule
0 & A and B both look like ground truth & Tie \\
1 & A looks more like ground truth & Method A in \texttt{key.csv} \\
2 & B looks more like ground truth & Method B in \texttt{key.csv} \\
3 & Both do not look like ground truth, but A and B are similar & Tie \\
4 & Both do not look like ground truth, and A and B are different & Tie \\
\bottomrule
\end{tabular}
}
\end{table}

\paragraph{Preference decoding and statistical tests.}
For each image, the three reader scores are decoded into DPS, RPU, or tie preferences using the hidden method key. The main table reports the majority label per image. If no method receives a non-tie majority, the image is counted as a tie-majority case. Binomial tests and Wilson confidence intervals are computed only over non-tie majority cases, because tie-majority cases do not express a directional method preference. Raw score counts are reported separately in Table~\ref{tab:raw-ratings} so that tie-heavy behavior is visible rather than hidden by preference decoding.

\begin{table}[h]
\centering
\caption{Raw rating-category counts for the FFHQ reader studies. The decoded preference table in the main text collapses these categories into DPS, RPU, and tie, but the raw categories are preserved for protocol auditing.}
\label{tab:raw-ratings}
\small
\resizebox{\textwidth}{!}{
\begin{tabular}{llrrrrr}
\toprule
Degradation & Study & rating 0 & rating 1 & rating 2 & rating 3 & rating 4 \\
\midrule
Box inpainting & Blind pair & 2606 & 136 & 133 & 125 & -- \\
Box inpainting & With GT & 2518 & 161 & 168 & 104 & 49 \\
Gaussian blur & Blind pair & 2858 & 39 & 30 & 73 & -- \\
Gaussian blur & With GT & 2648 & 50 & 56 & 224 & 22 \\
Motion blur & Blind pair & 2916 & 22 & 20 & 42 & -- \\
Motion blur & With GT & 2742 & 46 & 53 & 150 & 9 \\
\bottomrule
\end{tabular}
}
\end{table}

\paragraph{Agreement statistics.}
Fleiss' kappa is highest for box inpainting with ground truth, where $\kappa=0.747$, and lower for the blur tasks, where the dominant tie category reduces the number of informative preference cases. The corresponding blind-pair kappas are 0.568 for box inpainting, 0.299 for Gaussian blur, and 0.182 for motion blur. With ground truth, the kappas are 0.747, 0.484, and 0.455, respectively.

\end{document}